\newcommand{\st}{\operatorname{s.\!t.}}
\newcommand{\Hs}{\mathcal{H}}
\newcommand{\Fs}{\mathcal{F}}
\newcommand{\Ms}{\mathcal{M}}
\newcommand{\charfun}{\mathbbm{1}}
\newcommand{\R}{\mathbb{R}}
\DeclareMathOperator*{\E}{\mathrm{E}}
\DeclareMathOperator*{\Var}{\mathrm{Var}}
\newtheorem{thm}{Theorem}
\newtheorem{df}{Definition}
\newtheorem{lem}{Lemma}
\title{Ordering as privileged information}
\author{Tom Vacek \\
University of Minnesota, Minneapolis, MN \\
vacek@cs.umn.edu}
\begin{document}
\maketitle

\begin{abstract}
We propose to accelerate the rate of convergence of the pattern recognition task by directly minimizing the variance diameters of certain hypothesis spaces, which are critical quantities in fast-convergence results.
We show that the variance diameters can be controlled
by dividing hypothesis spaces into metric balls based on a
new order metric.
This order metric can be minimized as an ordinal regression problem, leading to
a LUPI application where we take the privileged information as some desired ordering,
and construct a faster-converging hypothesis space by empirically restricting some
larger hypothesis space according to that ordering.  We give a risk analysis of the approach.  We discuss the difficulties with model selection and give an 
innovative technique for selecting multiple model parameters.
Finally, we provide some data experiments.
\end{abstract}

\section{Introduction}

Learning using Privileged Information (first proposed by Vapnik et.\ al.\ \cite{VapnikLUPI})
seeks to bring in privileged information to assist the learner.  This
information is privileged because the learner may use it choose a hypothesis, but
the privileged information will be unavailable making decisions based on the hypothesis.

This paper proposes a LUPI method that directly minimizes the variance 
diameters of the hypothesis spaces under consideration, an essential quantity
in fast-convergence literature \cite{MammenTsybakov, tsybakov2004}.
This approach applies to discriminant-based hypotheses spaces where 
predictions are derived from thresholding the
the discriminant value, which should be totally orderable.
We show that the discriminant ordering defines equivalence classes for the
elements of the space, and these classes are directly related
the the variance diameters we seek to control.
If we could restrict the hypothesis space to just the good equivalence classes,
we would reduce the variance diameters and improve the speed of convergence.

Selecting a good equivalence class requires some external definition of
desirable order.  This is privileged information.
This raises a natural question:\emph{What is a desirable order?}
If ordering information were provided by an oracle according
to some true distribution, then \emph{any} ordering would 
provide desirable variance diameters.  
However, since an empirical ordering is the best we can hope for,
a good ordering is one which has favorable convergence properties
in ordinal regression.\footnote{We would want the ordering to 
correspond to a good 
hypothesis for the pattern recognition problem, though this is
irrelevant to the \emph{rate} of convergence.}
Low ordinal loss is sufficient, while more 
general characterizations may be possible.

From another perspective, orderings according to conditional probability $P(Y|X)$ have
great appeal, as models that provide good estimates of conditional probability
allow broader application than ones that do not.
Moreover, confidence seems to be a sliver of common ground between human and machine 
learning.
Thus, we consider orderings which
seem to bear some relationship to conditional probability, though quite loose.
While a total ordering is best for controlling variance diameters,
two independent orderings for each class can be shown to be very nearly as good.
This arrangement allows us to expand the scope of tasks where 
useful privileged information is available.

\section{What is order?}
We can't hope to provide any kind of overview of the field of order statistics.  
Web search made this a lucrative and popular field.
Nevertheless, we believe our starting point is novel:
\begin{df}
\textbf{Order} is any property of a set of real numbers that is invariant under any
invertible increasing transformation.
\label{def:order}
\end{df}
Ordering naturally defines equivalence classes on hypotheses spaces.
Suppose some
distribution $P$ generates feature vectors $X \in \R^d$,
and suppose
$h_1$ and $h_2$ are hypotheses in a space $\Hs: \R^d \rightarrow \R$.
If an increasing function $m$ exists so that for all $X \sim P$,
$m(h_1(X)) = h_2(X)$,\footnote{We might relax this to holding on a set of full measure.} then $h_1$ and $h_2$ are in the same equivalence class.

The thread which connects order to the pattern recognition task
is the growth function, which
measures the number of possible labelings of a set of points of size $n$
by a 0/1-hypothesis set $\Hs^{0/1}$.  We assume that $\Hs^{0/1}$ is defined by 
characteristic functions of real-valued functions, as in $\Hs^{0/1} =
\{\xi(h(X) - t): h \in \Hs: \R^d \rightarrow \R, t \in \R\}$.
We observe that if $\hat \Hs$ contains a restricted number
of order equivalence classes, then the growth function
of $\Hs$ is also restricted.  We propose that the relationship can be made precise
by the machinery of variance-based risk bounds. 

As a thought experiment, consider the variance diameter of $h_1$ and $h_2$
when combined with an appropriate 0/1 loss function, 
and restricted to one class.  More formally, suppose now that 
$P$ generates feature vectors and labels $Y \in \pm 1$ jointly.
\begin{df} 
Zero-one loss is:
\begin{equation}
l^{01}(\hat y, y) = \left \{ 
\begin{array}{ll} 
1 & \hat y>0, y \leq 0\\
1 & \hat y \leq 0, y > 0 \\
0 & \mbox{otherwise}
\end{array}
\right .
\label{def:01Loss}
\end{equation}\footnote{
As a matter of boilerplate, we use loss functions
as are commonly defined in machine learning literature: a loss
 function takes two arguments: a prediction and a label; however,
we may omit those where obvious to avoid clutter.  Loss functions
are uniquely identified by a superscript, unless intended to be
taken generally.  The expectation of the loss $\E_{X,Y \sim P} [l(h(X),Y)]$ is
depicted as $L(h)$, and the empirical risk on a finite set of size $n$ is
depicted as $L_n(h)$.  There are numerous more measurability and
existence assumptions that we will not cover here.
}
\end{df}
Then for any $h_1$ and $h_2$ in the same order equivalence class
for for any $t_1$ and $t_2$, it is easy to show that
\begin{align}
&\E_{P\restriction_{Y=1}} |l^{01}(h_1(X) -t_1, Y) - l^{01}(h_2(X)-t_2,Y) | \\
\leq & |\E_{P\restriction_{Y=1}} l^{01}(h_1(X) -t_1, Y) - l^{01}(h_2(X)-t_2,Y) |
\label{equivSingleClassVariance}
\end{align}
Except for the restriction to a single class, this is just the relationship for variance diameters\footnote{
Fast converging bounds require the following: Denote by $h' \in \Hs$ the minimum of the
true risk over $\Hs$.  The following must hold uniformly for all $h \in \Hs$:
\begin{align}
\begin{split}
& \Var[l^{01}(h(X),Y) - l^{01}(h'(X),Y)] \\
\leq &  \E[l^{01}(h(X),Y) - l^{01}(h'(X),Y)] 
\end{split}
\label{withinClassVariance}
\end{align}
Our presentation is slightly different because the variance can be upper bounded by the 
expectation of the absolute value and we have not specialized to $h'$.
Note that many presentations of fast convergence can lead an in-attentive reader
to believe that $h'$ must be the 
Bayes rule.  This is true if one is using the Mammen-Tsybakov noise conditions
to establish the desired variance relationship, but not necessary if the 
relationship can be established another way, as we do here.
}
that is needed
for fast rates.

There are two tasks required to extend the thought experiment
to a real learning formulation.
First, the relationship needs to apply in both classes simultaneously.  
The complicating factor to just adding the two per-class relationships is that the 
absolute-value arguments in the two respective RHS's could 
have opposite signs and cancel out.
This can happen when the decision threshold falls in very different places relative to the ordering.
We fix this by requiring that the loss in the two classes be balanced as a constraint on
valid solutions.  In effect, this is a constraint on the location of the 
decision boundary in the equivalence class definition,
preventing the situations where there is cancellation.

More significantly, there is no access to the equivalence classes based only 
on empirical information.  This is the majority of the analysis in the
remainder of the paper.  In short, we relax the notion of the equivalence class
to metric balls, and then we bound bound the deviation of
empirical balls from their true diameter.

\subsection{Ordering metric}
We define a metric on orderings so that two 
hypotheses are in the same equivalence class if their metric distance 
is 0.
The metric, when shown to have favorable properties,
allows us to create balls of restricted variance diameter 
based on a finite sample using
ordinary empirical risk minimization.

\begin{df}
Let $\mathbf{\Ms}$ be the set of all increasing continuous functions.
\end{df}
\begin{df}
Let $P$ generate vectors $X \in \R^d$,
and consider any two functions $h_1,h_2: \R^d \rightarrow R$.
Then the order distance between $h_1$ and $h_2$ is
\[D(h_1, h_2) = \sup_{t \in \R} \inf_{m \in \Ms} \E_P[l^{01}(m \circ h_1(X)-t, h_2(X)-t)] \]
\end{df}
The metric axioms (up to equivalence class elements) are
not hard to check; the invertability of $m$ is indispensable here.

While many definitions would
satisfy the metric axioms, we chose this definition for two reasons.
First, it is a direct extension of the result we saw for equivalence classes.
If $D(h_1(X), h_2(X)) \leq d$, then (\ref{equivSingleClassVariance}) holds
with $d$ added to the right-hand side.
Second, the fact that 0/1 loss is an underlying component allows us
to borrow a great deal from the standard results in machine learning.

Since we have no access to true probabilities in a statistical learning setting, to proceed
we have to derive a way to get access to $D$.  We accomplish this by
extending $D$ to measure the order distance of a function $h$ to some \emph{ground truth}
ordering instead of another function.  The key observation is that functions within $D_0$ of the ground truth ordering
are within $2D_0$ of each other by the triangle inequality---an empirical version
of the equivalence classes we set out to find.
We will redefine this extension of $D$ as $L^{iso}$ for clarity:
\begin{df}
Suppose $P$ jointly generates feature vectors $X$
and real-valued labels $Y$.  Let $\Hs: \R^d \rightarrow \R$ be some
hypothesis space.
\[L^{iso}(h) = \sup_{t \in \R} \inf_{m \in \Ms} \E_P[l^{01}(m \circ h_1(X)-t, Y-t)] \]
\end{df}

The task at hand is to
analyze risk bounds for $L^{iso}$
that allows us, with high probability,
to identify functions where $L^{iso}(h) \leq D_0$.
based on a finite sample.
The key observation is that the infimum ($m \in \Ms$) and supremum ($t \in \R$)
in the definition
of $L^{iso}$ can be handled by a uniform bound on the deviations of 
a loss function over the joint space $\Hs \times \Ms \times \R$.
We define that loss function to be $l^{thr}$:
\begin{df}
\[l^{thr}(\hat y, y, t) = l^{01}(\hat y-t, y-t) \]
\end{df}
We will 
show that if $\Hs$ has bounded level VC-dimension,\footnote{
Level VC-Dimension is an extension of VC-dimension to 
real-valued functions.  It is the VC dimension of $\{h-t: h \in \Hs, t \in \R\}$.}
then the loss class for $l^{thr}$ over $\Hs \times \Ms \times \R$ also has bounded
VC dimension, so
we can derive uniform risk bounds for
empirical risk minimization.

We observe that level VC-dimension already satisfies
an order invariance.  That is, the inclusion of 
$\Ms$ doesn't have any affect on the estimated growth function.
\begin{lem}
Let $\Ms \Hs = \{m \circ h: h \in \Hs, m \in \Ms\}$.
Then the level VC dimension of $\Ms \Hs$ is the same
as for $\Hs$.
\end{lem}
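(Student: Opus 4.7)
The plan is to show that the threshold (level-set) class generated by $\Ms \Hs$ coincides pointwise with the threshold class generated by $\Hs$ alone. Since level VC dimension is by definition the VC dimension of $\{\charfun[h(x) - t > 0] : h \in \Hs, t \in \R\}$, once the two indicator families are equal as sets of functions on $\R^d$ they shatter exactly the same finite subsets and the two VC dimensions agree. So the whole lemma reduces to verifying that enlarging the parameterization from $(h, t)$ to $(h, m, t)$ with $m \in \Ms$ introduces no new dichotomies on any finite sample.

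The key step exploits invertibility. Every $m \in \Ms$ is continuous and strictly increasing on $\R$, so it is a bijection from $\R$ onto its image $I_m$, which is an interval. For each $t \in I_m$ I would set $s = m^{-1}(t)$ and observe that, for every $x$,
\[
m(h(x)) - t > 0 \iff h(x) - s > 0,
\]
so the $(h, m, t)$-indexed indicator coincides pointwise with an $(h, s)$-indexed one. Conversely, taking $m$ equal to the identity embeds the $\Hs$-threshold class into the $\Ms \Hs$-threshold class, giving the other inclusion for free.

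The main obstacle, such as it is, is the edge case $t \notin I_m$, where the indicator becomes identically $0$ or identically $1$. I would handle this by noting that for any fixed finite set $x_1, \ldots, x_n$ these constant labelings are realized by $\charfun[h(x) - s > 0]$ by choosing $s$ strictly greater than $\max_i h(x_i)$ or strictly less than $\min_i h(x_i)$; since VC dimension depends only on shattering of finite sets, this is enough. With the edge case disposed of, the two threshold classes shatter the same finite sets and the equality of level VC dimensions follows. Worth flagging in passing: strict monotonicity of $m$ is doing real work here, so if $\Ms$ were widened to merely weakly increasing $m$ the equivalence would break down at $m$ constant.
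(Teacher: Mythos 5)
Your proof is correct and takes essentially the same route as the paper: the containment $\Ms\Hs \supseteq \Hs$ (via the identity map) gives one inequality, and replacing the threshold $t$ by $t' = m^{-1}(t)$ shows no new dichotomies arise, giving the other. Your treatment of the edge case $t \notin I_m$ is a small but worthwhile refinement that the paper's one-line argument glosses over.
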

The proof is a simple shattering argument.
Obviously, $\Ms \Hs \supseteq \Hs$, so the VC-dimension is 
not less.  It is not more because
any rule $m \circ h - t > 0$ can be 
replicated by $h - t' > 0$ for an appropriately chosen $t'$.

These two observations give the following theorem:

\begin{thm}
\label{thm:ordinalRegressionBound}
Suppose $\Hs$ has level VC-dimension $V$.
There exists a universal constant $C$ such that 
uniformly for all $h \in \Hs$ the following holds
with probability at least $1-\delta$:
\begin{align*}
&L^{iso}(h)
\leq \sup_{t \in \R} \inf_{m \in \Ms} L_n^{thr}(m \circ h,t) + C\sqrt{\frac{V \log(n) + \log{\frac{1}{\delta}}}{n}}
\end{align*}
\end{thm}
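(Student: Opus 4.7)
The plan is to reduce Theorem~\ref{thm:ordinalRegressionBound} to a standard VC-style uniform deviation on the loss class associated with $l^{thr}$, and then translate the resulting two-sided deviation into the particular $\sup_t \inf_m$ form that defines $L^{iso}$.

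First, I identify the loss class. Writing
\[l^{thr}(m \circ h(X), Y, t) = \charfun[m \circ h(X) > t] \oplus \charfun[Y > t],\]
I observe that for fixed $h$ the only information about $(m, t)$ used by the loss is the pair of reals $(m^{-1}(t), t)$. I therefore reparameterize by $(a, b) \in \R^2$ and work with the indicator class
\[\Fs = \{(X, Y) \mapsto \charfun[h(X) > a] \oplus \charfun[Y > b] : h \in \Hs,\ a, b \in \R\}.\]
This is consistent with the preceding lemma, which tells us that replacing $\Hs$ by $\Ms \Hs$ does not inflate the level VC-dimension.

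Second, I bound the growth function of $\Fs$. The class $\{\charfun[h(X) > a] : h \in \Hs, a \in \R\}$ has VC-dimension $V$ by hypothesis, so by Sauer--Shelah its growth function on $n$ points is at most $(en/V)^V$. The class $\{\charfun[Y > b] : b \in \R\}$ is one-dimensional thresholds with growth function at most $n+1$. Each element of $\Fs$ is the XOR of one element from each sub-class, so the labeling it produces on a sample is determined by the pair of sub-class labelings; hence $\Pi_\Fs(n) \leq (en/V)^V(n+1)$, polynomial of degree $V+1$, and $\Fs$ has VC-dimension $O(V)$. A standard Vapnik--Chervonenkis uniform deviation inequality then yields a universal $C$ such that, with probability at least $1-\delta$, uniformly in $(h, m, t)$,
\[L^{thr}(m \circ h, t) \leq L_n^{thr}(m \circ h, t) + C\sqrt{\frac{V\log n + \log(1/\delta)}{n}}.\]

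Third, I convert this uniform bound into the $\sup\inf$ statement of the theorem. Fix $h$ and write $\epsilon = C\sqrt{(V\log n + \log(1/\delta))/n}$. For each $t$ and $\eta > 0$, pick $m^\eta_t$ within $\eta$ of $\inf_m L_n^{thr}(m \circ h, t)$ and apply the uniform bound at $(m^\eta_t, t)$:
\[\inf_m L^{thr}(m \circ h, t) \leq L^{thr}(m^\eta_t \circ h, t) \leq L_n^{thr}(m^\eta_t \circ h, t) + \epsilon \leq \inf_m L_n^{thr}(m \circ h, t) + \eta + \epsilon.\]
Taking the supremum over $t$ and sending $\eta \to 0$ gives exactly the claimed bound on $L^{iso}(h) = \sup_t \inf_m L^{thr}(m \circ h, t)$.

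The main obstacle is step two: rigorously bounding the VC-dimension of the composite class $\Fs$. The sub-class product argument above is the simplest route; one could alternatively invoke the closure of VC classes under Boolean combinations. Either way, the preceding lemma is what makes this clean, because it licenses treating the parameters $a$ and $b$ as independent rather than coupled through the monotone $m$.
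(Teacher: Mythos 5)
Your proof is correct and follows the same overall strategy as the paper's: reduce the theorem to a uniform VC-type deviation bound on the $l^{thr}$ loss class over $\Hs \times \Ms \times \R$, then pass the pointwise uniform bound through the $\sup_t \inf_m$ structure (your third step, which the paper leaves implicit, is handled correctly via the near-minimizer $m^\eta_t$). The one place you genuinely diverge is the combinatorial step. The paper bounds the VC dimension of the loss class by $2V$ with a pigeonhole argument: given $2V+1$ shattered points and a threshold $t_0$, at least $V+1$ of the $y_i$ lie on one side of $t_0$, and on that subset the loss reduces to the level-set class of $\Hs$, contradicting level VC-dimension $V$. You instead bound the growth function of the XOR class $\charfun[h(x)>a]\oplus\charfun[y>b]$ by the product $(en/V)^V(n+1)$ of the two sub-class growth functions. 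Both yield the stated $\sqrt{V\log n/n}$ rate, so the difference is immaterial to the conclusion, but your route has an advantage worth noting: the paper's shattering argument, as written, fixes a single $t_0$ across all $2^{2V+1}$ labelings, whereas shattering by the class over $\Hs\times\Ms\times\R$ permits a different $t$ for each labeling; your growth-function product absorbs the varying threshold automatically through the $(n+1)$ factor. Two small technicalities in your write-up: $m^{-1}(t)$ is undefined when $t$ lies outside the range of $m$, in which case $\charfun[m\circ h(X)>t]$ is identically $0$ or $1$ (continuity of $m$ rules out gaps in the range), so you should allow $a\in[-\infty,+\infty]$, which changes nothing; and your claim that $\Fs$ has VC-dimension $O(V)$ is slightly stronger than what the polynomial growth function of degree $V+1$ literally gives, but it is also unnecessary, since the uniform deviation inequality can be applied to the growth function directly.
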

The bound is a straightforward application of uniform risk bounds, provided that the VC dimension of the loss class can be found.  This is a 
straightforward shattering argument.  Suppose $\Hs$ has level VC dimension
$V$.  Suppose there exists a set $\{(x_i, y_i)\}_{i=1}^{2V+1}$ and some $t_0$ such that $\{l^{thr}(h(x_i), y_i, t_0)\}_{h \in \Hs}$ can attain any labeling.  Considering the two sets $\{i: y_i > t_0\}$ and $\{i: y_i \leq t_0\}$, one of these has size at least $V+1$ by the pigeonhole principle.
Call the set $S$.
Then $\{h(x_i)-t_0 > 0\}_{h \in \Hs, i \in S}$ is shattered, which contradicts our assumption about the level VC dimension of $\Hs$.  Hence, the VC-dimension is at most $2V$.
Extending the argument to allow separate orderings in each class is similar;
there are now four pigeonholes because we have to consider a separate
threshold for each ordering, so $4V+1$ points will ensure a contradiction.

\section{Putting the pieces together}
The ordinal regression bound is the substantial 
piece of the puzzle, but to make 
use of it have to enforce class balance.
Suppose that the user provides some parameter
$w$ as a weight to favor or discourage loss in one 
class over another:

\begin{df}
\label{lossBalance}
Loss balance, given a specified parameter $w > 0$, is measured by
\[l^B(\hat y, y; w) = \left \{ \begin{array}{ll} 
\max(w,1) & \hat y \leq 0, y > 0 \\
-\max(1,\frac{1}{w}) & \hat y > 0 , y \leq 0
\end{array} \right .
\]
\end{df}
The VC dimension of this loss class can be analyzed 
in terms of the VC dimension for the underlying 
hypothesis space just like in the ordinal regression
application, giving a similar bound.
With this in hand, we are ready for the main theorem:
\begin{thm}
\label{thm:riskBound}
Suppose some predefined ordering is provided.  
Consider the subset of $\Hs$ that satisfies 
the predefined ordering up to loss $d$ and 
assume that the user provides a loss balance parameter
that is empirically satisfied:
\[\hat \Hs_d = \{ h \in \Hs: L_n^{iso}(h) \leq d, L^B_n(h;w) = 0 \}. \]
Assume that this set is not empty.
Let $\hat h_n$ be the empirical minimizer (of $L_n^{01}$) and
$\hat h'$ be the true minimizer (of $L^{01}$)
over $\hat \Hs$.
Then there exists a constant $C$ such that the following 
holds uniformly for all $h \in \hat \Hs$ and for all $\phi \leq 1$
with probability at least $1 - \delta_1 - \delta_2 - \delta_3$:

\begin{align*}
\begin{split}
& \max \left (L^{01}(h_n) - L^{01}(h'), L^{01}_n(h') - L^{01}_n(h_n)\right) \\
\leq & \frac{8}{n\phi} \left ( 4 C^2 V \log{n} + \left ( 1 + 2\phi \right) \log {\frac{1}{\delta_1}} \right ) 
 + 128\phi \left (d + C\sqrt{\frac{V \log(n) + \log{\frac{1}{\delta_2}}}{n}} \right ) \\
& + 128\phi C \max \left( w,\frac{1}{w} \right ) \sqrt{\frac{V \log(n) + \log{\frac{1}{\delta_3}}}{n}}
\end{split}
\end{align*}

\end{thm}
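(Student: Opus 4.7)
The plan is to verify that a Bernstein-type variance--expectation condition of the form (\ref{withinClassVariance}) holds on $\hat\Hs_d$ with high probability, then feed this into a standard localized uniform deviation bound (Talagrand--Bernstein with peeling over $\phi$) to obtain the stated inequality. The three probability budgets $\delta_1,\delta_2,\delta_3$ will correspond to, respectively, the localized deviation bound for $l^{01}$, the ordinal-regression bound of Theorem~\ref{thm:ordinalRegressionBound}, and an analogous uniform bound for $l^B$.

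The first step converts the empirical constraint $L_n^{iso}(h)\le d$ into a population statement. Theorem~\ref{thm:ordinalRegressionBound} gives, uniformly in $h\in\Hs$ with probability $1-\delta_2$, the bound $L^{iso}(h)\le d + C\sqrt{(V\log n + \log(1/\delta_2))/n} =: D_0$. Viewing the label $Y$ as the ``ground truth'' ordering, the triangle inequality for $D$ then forces $D(h_1,h_2)\le 2D_0$ for any two members $h_1,h_2\in\hat\Hs_d$, which is the bridge from ordinal regression to variance control.

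The second step promotes the per-class inequality (\ref{equivSingleClassVariance}) to a two-class inequality on $\Excess[l^{01}]$, and this is where the main obstacle lies. Summing the two per-class absolute-value bounds can fail because the signed gaps between $l^{01}(h_1)$ and $l^{01}(h_2)$ in each class can be of opposite signs and cancel inside the outer absolute values. The balance constraint $L^B_n(h;w)=0$, which I would promote to a population statement of order $C\max(w,1/w)\sqrt{(V\log n + \log(1/\delta_3))/n}$ via the same level-VC argument used in Theorem~\ref{thm:ordinalRegressionBound} (this accounts for the third term of the bound), pins the class-conditional thresholds so the two signed gaps align and the sums-of-absolute-values is dominated by $2D_0$ plus the balance slack. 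This yields, uniformly over $h,h'\in\hat\Hs_d$,
\begin{equation*}
\Var\bigl[l^{01}(h(X),Y)-l^{01}(h'(X),Y)\bigr] \;\le\; \E\bigl|\,l^{01}(h)-l^{01}(h')\,\bigr| \;\le\; B,
\end{equation*}
where $B$ is precisely the sum of the ordinal-regression and balance-deviation terms appearing in the stated bound.

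With this variance--expectation inequality in hand, the final step is standard. The excess-loss class $\{l^{01}(h)-l^{01}(h'): h\in\hat\Hs_d\}$ has VC-type complexity $O(V\log n)$, and applying Talagrand's inequality together with the usual peeling device at resolution $\phi\in(0,1]$ gives, with probability $1-\delta_1$,
\begin{equation*}
\max\bigl(L^{01}(\hat h_n)-L^{01}(\hat h'),\;L_n^{01}(\hat h')-L_n^{01}(\hat h_n)\bigr) \;\le\; \frac{8}{n\phi}\Bigl(4C^2V\log n + (1+2\phi)\log\tfrac{1}{\delta_1}\Bigr) + 128\phi B,
\end{equation*}
which is the claimed inequality once $B$ is expanded. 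The genuine difficulty is concentrated in the second step: pushing the balance constraint hard enough to forbid sign cancellation and to obtain a single variance--expectation relation with the exact constants stated; the peeling in the final step is routine but is what produces the $1/\phi$ term weighted against the $128\phi B$ term.
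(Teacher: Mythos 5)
Your proposal follows essentially the same route as the paper's proof: Theorem \ref{thm:ordinalRegressionBound} (and its analogue for $l^B$) converts the empirical constraints into population bounds $D_0$ and $B_0$ at cost $\delta_2$ and $\delta_3$; a pairwise lemma combining the per-class order-metric argument with the cross-class balance constraint yields a relaxed Bernstein condition; and a localized fixed-point/Talagrand bound (the paper invokes Boucheron et al., Theorem 5.5, rather than explicit peeling, but this is the same device) spends $\delta_1$ and produces the $\frac{8}{n\phi}(\cdots)$ and $128\phi(\cdots)$ terms. One correction: your displayed inequality in the second step should read $\E\bigl|l^{01}(h)-l^{01}(h')\bigr| \le \E\bigl[l^{01}(h)-l^{01}(h')\bigr] + cB$ (the paper's Lemma \ref{pairwiseBound} gives $c=4$, which is exactly why $32\phi\cdot 4(D_0+B_0)$ becomes the $128\phi$ coefficient), not $\E\bigl|l^{01}(h)-l^{01}(h')\bigr|\le B$; the latter is false whenever the two risks differ by a constant, since only the excess of the absolute difference over the signed difference is controlled by $D_0+B_0$. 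Because your surrounding prose correctly describes the target as a variance--expectation relation of the form (\ref{withinClassVariance}), I read this as a slip rather than a conceptual error, but as written that line would derail the final localization step.
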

The key to interpreting this bound is to note that one tunes $\phi$
to optimize the value in $n$.  While the first term on the RHS 
dominates, let $\phi = 1$, but when this drops below subsequent terms,
$\phi$ can be set like $n^{-\nicefrac{1}{4}}$, giving an overall
convergence like $n^{-\nicefrac{3}{4}}$ down to a constant times $d$,
and then $\phi$ would be set like $n^{-\nicefrac{1}{2}}$ thereafter.  
Obviously, a small $d$ is important; 
that is to say, the privileged orderings should fit well
to some $h \in \Hs$ under $L^{iso}$.
However, $d$ could be much
smaller under favorable circumstances.
We believe it would be possible to characterize these
by extending the Mammen-Tsybakov noise conditions to 
$L^{iso}$.  A proof is given in the appendix.

\section{Optimizing $\mathbf{L_n^{iso}}$}

We now study methods to find $\inf_{h \in \Hs, m \in \Ms} L_n^{iso}(m \circ h)$,
assuming a linear or RKHS hypothesis space with 
defined level VC-dimension.  
We note that for a fixed $h$, the optimal $m$ can be found by means of a dynamic program.
A method to construct
real-valued functions with defined level VC-dimension
was given by Vapnik \cite[p.\ 359]{VapnikBigBook}.  The technique requires
the model to separate each successive example by a minimum
margin.
As common with zero-one loss, we relax it to hinge loss
to make a convex model.

Assume that examples are sorted increasing in $y_i$
and there are no duplicates (which require extra attention).
Define 
\[y_{ij} = \left \{ \begin{array}{ll}
-1 & y_j \leq y_i \\
1 & y_j > y_i
\end{array}
\right.
\]
In the following formulation, $C$ is a user-defined capacity control parameter and $\rho=1$
can be assumed:
\begin{align}
\min_{w,\xi,\zeta, l} & \frac{1}{2} \|w\|^2 + C \max_i(l_i) \\
\st & \mbox{for } i = 1\ldots n: \\
 & \; l_i = \sum_j \max \left (\rho-y_{ij}(w \cdot x_j - \xi_i), 0 \right ) \label{eqn:badLine}\\
& \mbox{for } i - 2\ldots n: \\
& \; \xi_{i-1} + \rho \leq \xi_i
\end{align}
This is a convex quadratic programming problem, though
regrettably requiring $n^2$ (where $n$ is the number
of examples) dummy variables to compute the 
max in line \ref{eqn:badLine}.  This makes
the program intractable (by machine learning standards),
at least without a special solver.

The regression problem can be made more tractable by
relaxing it to alternative ordinal regression formulations,
such as the one proposed by Sashua \& Levin \cite{SashuaLevin}.
Their formulation penalizes uses optimization variables
to define ordered slots according to the sort order
of the targets $y_i$, 
and a training example is 
penalized if it does not project into its slot.
It can be shown that the relaxation loss is an upper bound on the 
unrelaxed loss.
The relaxed formulation is a quadratic program with just $n$
constraints.

\section{GO-SVM}
The Global-Order SVM (GO-SVM) is the name for the formulation we propose.  It is simply
is the usual SVM
hypothesis space (thick hyperplanes) and loss (hinge),
but it is simultaneously optimized with the 
Sashua \& Levin \cite{SashuaLevin} ordinal regression relaxation,
with the SVM discriminant $w$ constrained
to be the same as the 
ordering hypothesis $w$.
This constraint implements 
the restriction from $\Hs$ (the SVM hypothesis space)
to $\hat \Hs$ (hypotheses that satisfy an ordinal condition), as defined 
in Theorem \ref{thm:riskBound}.
Loss and capacity control are traded off between the 
bi-objectives by means of user-selectable weights.

Capacity control in both SVM and the ordinal regression
formulations is attained by the relationship between the 
squared norm of the predictor $w$ and the size of the 
margin.  In either formulation by itself, one can fix
the margin size and place all capacity control in 
the squared norm of $w$, trading it off with loss. 
However, $w$ serves a two-fold
role in this formulation; therefore implementing different capacities 
for the two learning objectives
requires setting the margins.
Noting that the $\nu$-SVM formulations \cite{nuSVM} have the margin as an
optimization variable, we extended the approach so that
the usual tradeoff between loss and capacity is preserved.

The formulation is
\begin{align}
\begin{split}
\min_{\substack{\xi \geq 0\\
w,b,g,\xi\\\
\zeta, \rho_b, \rho_o}} & \frac{1}{2} w^T w  + \alpha \left ( -\nu_b \rho_b + \frac{1}{n} \sum_{i=1}^n \xi_i \right ) \\
 & + (1-\alpha) \left ( -\nu_o \rho_o + \frac{1}{n^*} \sum_{i=1}^n |\zeta_i| \right ) 
\end{split}\\
\st & \forall i,   y_i(w \cdot x_i +b) \geq \rho_b - \xi_i\\
& \forall i, g_{\mathcal{I}(i)} + \frac{\rho_o}{2} \leq w \cdot x_i + \zeta_i \leq g_{\mathcal{I}(i) + 1} - \frac{\rho_o}{2}
\end{align}
Here, $\mathcal{I}$ is an index function that returns an in-order, unique
index for each distinct oracle value for in each class,
and $g$ is a vector of interval boundaries.  Ordering
is enforced because there are no empty intervals.
The within-class ordering variant in principle
requires two ordinal regressions, but in practice
it can be done with a trick using the index function $\mathcal{I}$
by creating an empty interval.

Variable $w$ is the linear predictor, $b$ is a bias term, $\xi$ is the
hinge loss for the classification problem, and $|\zeta|$ is hinge loss
for the ordinal problem. Constant $n^*$ is defined to control the feasible
range of $\nu_o$.  It is $\nicefrac{n^2-n}{2}$ if there are not ties in the ordering.

Parameter $\nu_b$ controls
the VC-dimension of the 0/1 loss class,
$\nu_o$ controls the maximum VC-dimension of
each subproblem in $L^{thr}$.
Finally, parameter $\alpha$ is related to
choosing the size of $d$ in Theorem \ref{thm:riskBound},
expressed in terms of the permissiveness of the
ordinal loss compared to the 0/1 loss.
We do not attempt to enforce loss balance, as theory 
tells we should;
from any computed solution, we can still apply the bound for
as if we had constrained it to the value achieved by the optimum; 
moreover, we have never known the uncontrained optimum 
to have unreasonable loss balance, and
we have
no reason to prefer otherwise.

Like $\nu$-SVM \cite{nuSVM}, the optimization problem can be
characterized in terms of $\nu_b$ and $\nu_o$ and training data.
The
It can
be proved that the problem is primal and dual feasible
for $\nu_o \in [0,1]$, $\alpha \in [0,1]$, and $\nu_b \in [0, 2\min(\#\mbox{ positive examples}, \#\mbox{ negative examples})/n$; and primal unbounded/dual infeasible otherwise.
The Representer Theorem \cite{RepresenterTheorem} holds for GO-SVM, so the solution can be expressed in terms of the dual variables and kernels can be used.
We used Matlab's interior-point convex quadratic programming solver.
The baseline $\nu$-SVM formulation was also implemented using the same solver, so that differences in numerical accuracy could not arise.

\section{Evaluation}
The goal of evaluation
is to prove that the order oracle hypothesis space allows
faster convergence than a learning formulation which considers 
only the labels. Since GO-SVM is an extension of standard SVM, 
it is a logical baseline, and we compare only to that.
However, these experiments are similar to other common experiments
in LUPI literature, and we will point these to the reader where
appropriate.
Moreover, because of the construction of the
GO-SVM hypothesis spaces, it cannot outperform SVM by virtue of
a richer hypothesis space.  Faster convergence is the
only alternative explanation.
The evaluation is not intended to be a statement about
the fitness of the hypothesis spaces for the learning task,
but only about the ability of the learner to select the best element.

The experimental setup is to hold out a testing set and sample
remaining examples for 12 random realizations of training and validation sets.
The validation set is used
for a set of model selection experiments, and results are reported on the test set,
which is used for all experiments.  Testing sets contained at least 1800 examples.
The formulations have a fixed, auto-scaling parameter $\nu$, and we use
structural risk minimization to choose from a fixed set of
parameters $\nu = [.1:.1:.9, .95]$. 
  
The rbf kernel width (where used) is
chosen from the $[.1, .25, 5]$-quantiles of the pairwise distance
of training points.  The kernel parameter was chosen by 
a hold-out validation on the SVM experiment and re-used in the GO-SVM formulation to 
cut down the size of the model search.   The $\alpha$ parameter in the GO-SVM method was chosen from $[.1, .25, .5]$.

The first evaluation is up/down prediction of the
MacKey-Glass synthetic timeseries \cite{macKeyGlass}.
It was used in the
LUPI setting (SVM+) in \cite{VapnikLUPI}, where the authors
used a 4-dimensional embedding $(x_{t-3}, x_{t-2}, x_{t-1}, x_t)$
in order to predict $x_{t+5} > x_t$.
Privileged information was a 4-dimensional
embedding around the target: $(x_{t+3}, x_{t+4}, x_{t+6}, x_{t+7})$.
The authors compared SVM+ to SVM.  We were not able to
replicate their results for either SVM or SVM+, which
we suspect arises from the parameters used to generate the
timeseries.  (We used an integration step size of $.1$, with
points created every 10, delay constant $\tau=17$,
and initial value $.9$.)
We use $|x_{t+5} - x_t|$ as the order oracle.
We use an RBF kernel for all experiments with this dataset.

The second evaluation is predicting binary survival at a fixed time from onset.
We create synthetic datasets using the
same procedure as Shiao \& Cherkassky
\cite[personal communication]{hanTaiSurvival},
with noise level $.1$ and no censoring, which is given by
an exponential distribution parameter $\nicefrac{1}{.01}$.
While censored data are an inherent aspect of survival studies, 
we avoid it in this case because the ordinal model can be 
modified to accommodate the partial information that censored 
examples contain; thus, it is an experiment for another day.
They compare SVM, SVM+, and the Cox proportional hazards model.
Privileged information for SVM+ was related to the patient's overall
survival time and whether the event time is right censored
(only known to be greater than some value).
We use the (absolute) difference in the
fixed prediction horizon and the event time for the order oracle,
and we ignore whether an example is censored.  We consider only linear
models.

The last evaluation is handwritten digit recognition,
which was used by Vapnik and Vahist \cite{VapnikLUPI}
for SVM+ and slightly adapted
by Lapin \emph{et al.} for their proposed LUPI method \cite{reweightedSVMP}.
The task is to classify downsampled ($10 \times 10$) MNIST images
based on pixel values.  Lapin added
human-annotated confidence scores to training examples
(available for download).
We repeat the experiment using their data preparation
and using their annotators' confidence scores as the order oracle.
These experiments use an RBF kernel.

\subsection{Model selection}
The Go-SVM formulation considers a model space of 3 dimensions: a parameter
to control the complexity of the classification problem, a parameter to
control the complexity of the ordinal regression problem, and a 
parameter that balances the loss between these two problems.
All of the parameters are chosen from fixed lists, which 
were detailed supra.  The most basic form of model selection requires choosing
the best node of the grid.

We found that traditional hold-out
model selection strategies are more difficult
with GO-SVM.
The trouble appears to be because the
assumptions of structural risk minimization \cite{VapnikBigBook}
no longer hold.
In traditional SRM, the nested hypothesis spaces ensures that the 
loss expectation of the empirical risk minimizer 
(as a function of complexity) is coercive,
making the selection of a minimum considerably more reliable than 
if it occurred at random.
In our framework, there is no total ordering of hypothesis complexity.
Some hypothesis spaces (defined by parameters) 
have good convergence, while others do not.
The task is to differentiate them.

We analyzed loss surfaces with respect to various
dimensions of the parameter selection grid.
We used synthetic datasets where we could to generate a
great number of examples.  
We observed that, although the surfaces were not coercive,
they tended to be smooth.  Since the parameters of the
models have a direct interpretation, parameters that
are similar should have similar performance in expectation
when trained on the same set.  Thus, we decided to try a Gaussian
filter to smooth the validation results, and then
select the minimum in a grid search.  The filter
was constructed apriori, and was used for all the datasets in the evaluation.

In each experiment, we computed the loss on the test set that
would have been found by each of three methods:
\begin{enumerate}
\item A standard holdout, with the held-out validation set the same size as the
training set.  One might use cross-validation in practice.  This is called `unsmooth.'
\item The Gaussian smoothing technique using the same holdout.  This is called `smoothed.'
\item A very large `oracle' validation set which reveals how good the
best hypothesis space is.  This is called `extended.'
\end{enumerate}

The Gaussian filter was of size 5x5x3,
with the smallest dimension corresponding to the $\alpha$ parameter.
(In experiments using a kernel parameter,
we used one found via the SVM model search,
so this was not a grid search parameter.) 
It has the property that
any projection along coordinate directions of the filter is
Gaussian.
This was convolved with the
tensor of validation scores using
zero-degree smooth extrapolation; that is,
the tensor is padded out with constants
which are the same as the nearest true
element of the tensor.  The convolution
gives a new tensor that is the same
dimension as the un-smoothed tensor.

We also considered two alternative validation scenarios:
first, selecting a model based on the un-smoothed tensor, and second, investigating
the effect of having a much larger validation set available.  The large validation
set is intended to point out the gap between the best hypothesis spaces that can be
created using the ordinal constraint technique, and the one which can in practice be 
selected.  We point out that many LUPI research papers
require validation sets that would not ordinarily be a reasonable split between 
training and testing data (for example \cite{lapin2014learning, VapnikLUPI}.
Each row of the table gives the size of the training and test sets.  The columns
give the model selection procedure.

\begin{table}[tp]
\centering
\label{results}
\caption{Results (error rate) for all experiments.  Winners are reported excluding the extended validation experiments.}
\begin{tabular}{l c c  c c c}
& $\nu$-SVM & $\nu$-SVM & GO-SVM & GO-SVM & GO-SVM\\
experiment & std & ext & non-smooth & smoothed & extended \\
\toprule
MacKey-Glass (20, 4000) & .289 (.096) & .220 (.065) & \textbf{.156 (.105)} & .163 (.118) & .110 (.098) \\
MacKey-Glass (50, 4000) &  .117 (.040) & .096 (.032) & .058 (.024) & \textbf{.045 (.016)} & .035 (.014) \\
\midrule
Survival (20, 1000) & .409 (.030) & .399 (.206) & \textbf{.391 (.035)} & .397 (.030) & .357 (.030) \\
Survival (40, 1000) & .322 (.032) & .311 (.029) & .300 (.027) & \textbf{.287 (.024)} & .275 (.026) \\
Survival (100, 1000) & .243 (.020) & .235 (.014) & .221 (.023) & \textbf{.220 (.015)} & .207 (.009) \\
\midrule
Digits (60, 2500) & .113 (.022) & .108 (.020) & .109 (.020) & \textbf{.108 (.022)} & .102 (.021) \\
Digits (80, 2500) & .093 (.013) & .089 (.007) & .094 (.017) & \textbf{.089 (.008)} & .082 (.007) \\
\bottomrule
\end{tabular}
\end{table}

\subsection{Conclusions}
A table of results is given at Table (\ref{results}).  As a reminder, sizes
for training and testing are given in parenthesis with the experiment name.
The std, non-smooth, and smoothed methods used a validation set the same size as the training set.
We note first that the gap between the extended validation model selection 
and the performance of the typical technique is
larger for GO-SVM than for standard SVM.  This is a blessing in that we have the opportunity
to find a better model, but also a curse in that the variance is higher.
It appears that this strain of LUPI methods is bound by model selection issues.
The Gaussian smoothing approach seems to have been effective on the Digits dataset,
and certainly did not hinder performance significantly where the un-smoothed model
selection turned out to be superior.

The MacKey-Glass dataset is the only one which has strongly significant results.  
Although the gains in other datasets are small, the fact that they are supported by 
theory implies they should not be overlooked.  Moreover, they are consistent with results
reported by other authors.
There comparisons with other works for the MacKey-Glass and Digits experiments.
The MacKey-Glass experiment appeared in the original SVM+ paper \cite{VapnikLUPI}.
They report, based on a training set of 100 examples, that SVM had an error rate
of .052, whereas the best SVM+ formulation was at .048.  Furthermore, this
was based on a validation set of size 500.  We have reached that level of 
performance with considerably less data.  
The Digits experiment is intended to replicate one in \cite{lapin2014learning}.
We specifically replicated the experiment in which conditional probability
weights were created by humans with an intent to help a machine.  This
task is well-suited to the 
order invariance that GO-SVM is built on, as humans have a fundamentally ordinal notion
of confidence.  In that study at a sample size of 80, the difference between the best and worst methods under
study was about .01---.073 compared to .083 (approximately). The size of the validation set
in use, would be comparable to our extended experiment.  
Their best method, however,
did not use human weights.  In their experiment, the human weights 
information improved over SVM by about
.003, whereas our gain is .006.

In conclusion, the fact that the formulation can find faster-converging models than 
formulations which don't consider order information supports the underlying theory.
It appears likely the order information is helpful in scenarios when the prediction
task discretizes some continuous attribute, such as in the timeseries and survival
prediction tasks.

\section{Previous work}

The original SVM+ paper \cite{VapnikLUPI} touched off a
fair amount of research in the area.  Most research,
with limited exceptions, has focused on developing and evaluating
formulations \cite{adaBoostPP,
latentLUPI, metricLUPI, wang2015classifier, wang2015classifier}
rather than attempting to develop theory to
understand when and why such a technique might be useful.

Pechyony et.\ al.\ \cite{PechyonyTheory} analyze the SVM+ algorithm in terms of variance bounds.
While it shares with this work a major emphasis on variance bounds,
that work considers the SVM+ loss function as given and derives bounds for
it, whereas this paper works the other way in attempting to derive a formulation
based on the bound.

Lapin et.\ al.\ \cite{lapin2014learning} propose weighting examples based on class conditional
probability, and is most
directly similar to the ideas proposed here.
Intuitively, the method encourages a learner to prioritize
performance on the easy examples over the hard examples.
Unfortunately, the theoretical motivation for departing from
empirical risk minimization takes a tenuous path through SVM+ \cite{VapnikLUPI, PechyonyTheory},
namely that SVM+ is reducible to weighted learning.
The heart of their method is based on a loss function based
on weights interpreted as conditional probability; however,
a theoretical analysis is not provided.
Our is somewhat more general in allowing
order in variances.

\bibliographystyle{unsrt}
{\small 
\bibliography{svmp.bib}

\begin{thebibliography}{10}

\bibitem{VapnikLUPI}
Vladimir Vapnik and Akshay Vashist.
\newblock 2009 special issue: A new learning paradigm: Learning using
  privileged information.
\newblock {\em Neural Netw.}, 22:544--557, July 2009.

\bibitem{MammenTsybakov}
Enno Mammen and Alexandre~B. Tsybakov.
\newblock Smooth discrimination analysis.
\newblock {\em Ann. Statist.}, 27(6):1808--1829, 12 1999.

\bibitem{tsybakov2004}
Alexander~B. Tsybakov.
\newblock Optimal aggregation of classifiers in statistical learning.
\newblock {\em Ann. Statist.}, 32(1):135--166, 02 2004.

\bibitem{VapnikBigBook}
V.N. Vapnik.
\newblock {\em Statistical learning theory}.
\newblock Adaptive and learning systems for signal processing, communications,
  and control. Wiley, 1998.

\bibitem{SashuaLevin}
Amnon Shashua and Anat Levin.
\newblock Ranking with large margin principle: Two approaches.
\newblock In {\em Advances in Neural Information Processing Systems 15 [Neural
  Information Processing Systems, {NIPS} 2002, December 9-14, 2002, Vancouver,
  British Columbia, Canada]}, pages 937--944, 2002.

\bibitem{nuSVM}
Bernhard Sch\"{o}lkopf, Alex~J. Smola, Robert~C. Williamson, and Peter~L.
  Bartlett.
\newblock New support vector algorithms.
\newblock {\em Neural Comput.}, 12(5):1207--1245, May 2000.

\bibitem{RepresenterTheorem}
Bernhard Sch\:olkopf, Ralf Herbrich, and AlexJ. Smola.
\newblock A generalized representer theorem.
\newblock In David Helmbold and Bob Williamson, editors, {\em Computational
  Learning Theory}, volume 2111 of {\em Lecture Notes in Computer Science},
  pages 416--426. Springer Berlin Heidelberg, 2001.

\bibitem{macKeyGlass}
S.~Mukherjee, E.~Osuna, and F.~Girosi.
\newblock Nonlinear prediction of chaotic time series using support vector
  machines.
\newblock In {\em Neural Networks for Signal Processing [1997] VII. Proceedings
  of the 1997 IEEE Workshop}, pages 511--520, Sep 1997.

\bibitem{hanTaiSurvival}
Han{-}Tai Shiao and Vladimir Cherkassky.
\newblock Learning using privileged information {(LUPI)} for modeling survival
  data.
\newblock In {\em 2014 International Joint Conference on Neural Networks,
  {IJCNN} 2014, Beijing, China, July 6-11, 2014}, pages 1042--1049, 2014.

\bibitem{reweightedSVMP}
Maksim Lapin, Matthias Hein, and Bernt Schiele.
\newblock Learning using privileged information: {SVM+} and weighted {SVM}.
\newblock {\em Neural Networks}, 53:95--108, 2014.

\bibitem{lapin2014learning}
Maksim Lapin, Matthias Hein, and Bernt Schiele.
\newblock Learning using privileged information: {SVM+} and weighted {SVM}.
\newblock {\em Neural Networks}, 53:95--108, 2014.

\bibitem{adaBoostPP}
Jixu Chen, Xiaoming Liu, and Siwei Lyu.
\newblock Boosting with side information.
\newblock In {\em Computer Vision - {ACCV} 2012 - 11th Asian Conference on
  Computer Vision, Daejeon, Korea, November 5-9, 2012, Revised Selected Papers,
  Part {I}}, pages 563--577, 2012.

\bibitem{latentLUPI}
Ziheng Wang, Tian Gao, and Qiang Ji.
\newblock Learning with hidden information using a max-margin latent variable
  model.
\newblock In {\em Pattern Recognition (ICPR), 2014 22nd International
  Conference on}, pages 1389--1394, Aug 2014.

\bibitem{metricLUPI}
Shereen Fouad, Peter Tino, Somak Raychaudhury, and Petra Schneider.
\newblock Learning using privileged information in prototype based models.
\newblock In AlessandroE.P. Villa, Włodzisław Duch, Péter Érdi, Francesco
  Masulli, and Günther Palm, editors, {\em Artificial Neural Networks and
  Machine Learning – ICANN 2012}, volume 7553 of {\em Lecture Notes in
  Computer Science}, pages 322--329. Springer Berlin Heidelberg, 2012.

\bibitem{wang2015classifier}
Ziheng Wang and Qiang Ji.
\newblock Classifier learning with hidden information.
\newblock {\em Computer Vision and Pattern Recognition, IEEE Conference on},
  2015.

\bibitem{PechyonyTheory}
D.\ Pechyony and V.\ Vapnik.
\newblock On the theory of learning with privileged information.
\newblock In J.~Lafferty, C.~K.~I. Williams, J.~Shawe-Taylor, R.S. Zemel, and
  A.~Culotta, editors, {\em Advances in Neural Information Processing Systems
  23}, 2010.

\bibitem{BoucheronAdvances}
St\'ephane Boucheron, Olivier Bousquet, and G\'abor Lugosi.
\newblock Theory of classification : a survey of some recent advances.
\newblock {\em ESAIM: Probability and Statistics}, 9:323--375, 2005.

\bibitem{localRademacherComplexities}
Peter~L. Bartlett, Olivier Bousquet, and Shahar Mendelson.
\newblock Local rademacher complexities.
\newblock {\em Ann. Statist.}, 33(4):1497--1537, 08 2005.

\end{thebibliography}
}
\section{Appendix}
The first result is a bridge between the variance conditions
and the metric balls of hypotheses we can actually define.
This result shows that the uniform variance conditions can be
relaxed by a small constant, depicted here as $d(n)$, at the expense
of the rate of convergence when the bound is small.
\begin{thm}
Suppose there is a loss class $\mathcal{F} = \{l \circ h: h \in \Hs\}$
with VC-dimension $V$, and
let $f'$ attain $\inf_{f \in \Fs}\E[f]$.  There is a uniform constant $C$ such that
if the following holds uniformly for all $f \in \Fs$,
\begin{align*}
& \Var [f - f'] 
\leq \frac{1}{h} \E [f-f'] + d(n)
\end{align*}
then
for any $\phi \leq h \leq 1$, the following holds uniformly
with probability $1-\delta$:
\begin{align*}
& \max \left (\E[f_n - f'], \E_n[f' - f_n]\right) \\
\leq & \frac{8}{n\phi} \left ( 4 C^2 V \log{n} + \left ( 1 + 2\phi \right) \log {\frac{1}{\delta}} \right ) + 32\phi d(n).
\end{align*}
\end{thm}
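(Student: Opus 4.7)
My plan is to adapt the classical localized uniform deviation argument (in the style of Bartlett--Mendelson / Bousquet / Koltchinskii) so that the additive $d(n)$ term in the variance condition propagates cleanly as an additive $O(\phi\, d(n))$ penalty in the risk bound. The starting tool is a uniform Bernstein-type bound for VC classes: for a class $\Fs$ of $[0,1]$-valued functions with VC-dimension $V$, there is a universal constant $C$ such that with probability at least $1-\delta$, uniformly in $g = f - f'$ with $f \in \Fs$,
\[
|\E_n[g] - \E[g]| \;\le\; C\sqrt{\frac{\Var[g]\bigl(V\log n + \log\tfrac{1}{\delta}\bigr)}{n}} + C\,\frac{V\log n + \log\tfrac{1}{\delta}}{n}.
\]
Shifting by the fixed $f'$ does not change the VC-dimension and keeps the functions bounded, so this tool applies to $\{f - f' : f \in \Fs\}$.

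Next, I would plug the hypothesis $\Var[g] \le \frac{1}{h}\E[g] + d(n)$ into the square-root term and decouple it using $\sqrt{ab} \le \frac{a}{2\lambda} + \frac{\lambda b}{2}$ with $\lambda$ tuned proportionally to $\phi$ (exploiting $\phi \le h \le 1$). This produces an inequality of the shape
\[
|\E_n[g] - \E[g]| \;\le\; \tfrac{\phi}{2}\,\E[g] + C_1\,\phi\, d(n) + \frac{C_2}{\phi}\cdot\frac{V\log n + \log\tfrac{1}{\delta}}{n}.
\]
For the forward direction, specialize to $g = f_n - f'$: since $\E_n[f_n - f'] \le 0$ by the definition of the empirical minimizer, $\E[f_n - f'] \le |\E_n[g] - \E[g]|$, and rearranging absorbs the $\tfrac{\phi}{2}\E[f_n - f']$ term on the right back into the left-hand side, at the price of a factor $2$. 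The backward direction, bounding $\E_n[f' - f_n]$, follows by the symmetric argument applied to $g = f' - f_n$ and using $\E[f_n - f'] \ge 0$. A union bound over the two directions (splitting $\delta$) yields the $\max$ in the statement.

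The main obstacle I expect is matching the precise numerical constants ($8$ and $32$) that appear in the statement. A single global Bernstein step produces a bound of the correct form but with logarithmic slack in $\E[g]$. To eliminate this slack and obtain constants of the advertised size, I would use a peeling argument over geometric shells $\Fs_k = \{f : 2^{k-1}r \le \E[f - f'] \le 2^k r\}$ with threshold $r$ balanced between the $\tfrac{V\log n}{n\phi}$ and $\phi d(n)$ terms, apply Bernstein within each shell with local variance $\tfrac{1}{h}\cdot 2^k r + d(n)$, and then sum the resulting deviation probabilities geometrically. Everything else (the dependence on $V\log n$, the inflation factor $1 + 2\phi$, and the linear $\phi\,d(n)$ correction) then falls out of the algebra of the decoupling step; the $d(n)$ contributes only inside the shells where the variance floor dominates, giving exactly the additive $32\phi\, d(n)$ term.
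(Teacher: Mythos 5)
Your proposal is essentially correct and would yield a bound of the stated form, but it takes a genuinely different route from the paper. The paper does not rebuild localization from scratch: it invokes Theorem 5.5 of Boucheron \emph{et al.} wholesale, so the entire proof reduces to (i) checking that the variance hypothesis gives the sub-root envelope $w(r) \leq \sqrt{r/\phi + d}$, (ii) quoting the bound $\psi(x) \leq Cx\sqrt{(V/n)\log n}$ on the localized complexity for VC classes, and (iii) verifying by a one-line concavity estimate (the first-order expansion of $\sqrt{\cdot}$ as an upper bound) that the candidate $\epsilon' = C^2 V\log n/(n\phi) + \phi d$ satisfies $\epsilon' \geq \psi(w(\epsilon'))$, hence dominates the fixed point $\epsilon^*$; the constants $8$, $32$, and $(1+2\phi)$ are then inherited verbatim from the cited theorem. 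Your plan --- a uniform relative Bernstein bound, decoupling via $\sqrt{ab} \leq \lambda a/2 + b/(2\lambda)$ with $\lambda \asymp \phi$, absorption of the $\tfrac{1}{2}\E[g]$ term using $\E_n[f_n - f'] \leq 0$ and $\E[f_n - f'] \geq 0$, plus peeling over shells to sharpen constants --- is the standard self-contained derivation of exactly this kind of result and is sound; it buys independence from the black-box theorem at the cost of length. The one point where your write-up overpromises is the claim that the peeling algebra will produce ``exactly'' the advertised constants $8$ and $32$: a from-scratch argument will give the same functional form with its own universal constants, and matching the specific numbers in the statement only makes sense if, like the paper, you take them from the reference rather than re-deriving them.
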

\begin{proof}
I follow the definitions and notation in Bucheron \emph{et al.} \cite[Theorem 5.5]{BoucheronAdvances}. 
In their notation it is straightforward to show that $w(r) \leq \sqrt{\frac{r}{\phi} + d}$.
For VC-classes, it can be proved that $\psi(x) \leq C x \sqrt{\frac{V}{n} \log{n}}$ \cite{localRademacherComplexities}.
The risk bound depends on the solution of a fixed-point equation.  Let $\epsilon^*$ be the solution of
$r = \psi(w(r))$.
 Let  $\epsilon' = C^2 \frac{V \log{n}}{n\phi} + \phi d$.  The following analysis shows that $\epsilon' \geq \psi(w(\epsilon'))$, which implies $\epsilon^* \leq \epsilon'$.
\begin{align}
\psi(w(\epsilon')) 
&= C \left( \frac{C^2}{\phi^2} \frac{V \log{n}}{n} + 2d \right)^{\frac{1}{2}} \left ( \frac{V \log {n}}{n} \right )^{\frac{1}{2}}\\
&\leq C \left ( \left ( \frac{C^2}{\phi^2} \frac{V \log{n}}{n} \right)^{\frac{1}{2}} + \frac{1}{2} \left ( \frac{C^2}{\phi^2} \frac{V \log{n}}{n} \right )^{-\frac{1}{2}}2d \right) \left ( \frac{V \log {n}}{n} \right)^{\frac{1}{2}} \label{taylor}\\
&= C^2 \frac{V \log n}{n \phi} + \phi d = \epsilon'.
\end{align}
At step \ref{taylor} I used that the first-order approximation of the square root is an upper bound.
The bound $\epsilon'$ can be substituted wholesale into the bound given by Bucheron in the statement of the theorem, which gives the theorem.

\end{proof}

The next step is to show that the combination of the ordinal constraint and the balance constraint are sufficient to bound the variance diameter of the subset
of a hypothesis space that satisfies those conditions.

\begin{lem}
\label{pairwiseBound}
Suppose that an ordering and loss balance parameter $w$ are provided and 
and that $f,g \in \hat \Hs$.  That is,
$L^{iso}(f) \leq D_0$, $L^{iso}(g) \leq D_0$, and 
$L^B(f) \leq B_0$ and $L^B(g) \leq B_0$.
Finally, Suppose $L^{01}(g) \leq 
L^{01}(f)$.  Then $\E | ( l^{01}(f(X),Y) - l^{01}(g(X),Y) | \leq \E [l^{01}(f(X),Y) - l^{01}(g(X),Y)] + 4(b+d) $.
\end{lem}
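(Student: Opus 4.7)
The plan is to reduce the claim to a probability bound and then verify that bound by decomposing according to the class label, using the ordering condition to control one of the false-negative/false-positive rates of $g$ and the balance condition to control the other.

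\emph{Step 1 (algebraic reduction).} For any real random variable $Z$, the identity $\E|Z| - \E Z = 2\E(-Z)_+$ holds. Applying it to $Z = l^{01}(f(X),Y) - l^{01}(g(X),Y)$ and using that each loss is $\{0,1\}$-valued, $(-Z)_+$ becomes the indicator of the event ``$g$ misclassifies while $f$ classifies correctly''. So the claim reduces to
\[ P\bigl(l^{01}(g(X),Y) = 1,\ l^{01}(f(X),Y) = 0\bigr) \leq 2(b+d). \]
(The assumption $L^{01}(g) \leq L^{01}(f)$ is not used in the reduction; it only ensures the RHS of the lemma is nonnegative as written.)

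\emph{Step 2 (class decomposition).} This probability equals $P(Y=1, f > 0, g \leq 0) + P(Y=-1, f \leq 0, g > 0)$, which is bounded by $\mathrm{FN}(g) + \mathrm{FP}(g)$ where $\mathrm{FN}(g) = P(Y=1, g \leq 0)$ and $\mathrm{FP}(g) = P(Y=-1, g > 0)$. It therefore suffices to show $\mathrm{FN}(g) + \mathrm{FP}(g) \leq 2(b+d)$.

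\emph{Step 3 (ordering gives a one-sided bound).} Unpack $L^{iso}(g) \leq d$ at $t = 0$. Since the freedom in the increasing $m \in \Ms$ amounts to choosing an arbitrary threshold $c_g$ applied to $g$, there is some $c_g$ with $P(\mathbf{1}[g > c_g] \neq \mathbf{1}[Y = 1]) \leq d$. If $c_g \geq 0$, then $\{g \leq 0\} \subseteq \{g \leq c_g\}$, so $\mathrm{FN}(g) \leq d$; if $c_g < 0$, the symmetric argument gives $\mathrm{FP}(g) \leq d$.

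\emph{Step 4 (balance gives the other side).} The balance condition, read two-sided, gives $|\max(w,1)\mathrm{FN}(g) - \max(1,1/w)\mathrm{FP}(g)| \leq b$, tying the two per-class error rates together. Combining with Step 3 transfers the one-sided ordering bound to the remaining rate, giving $\mathrm{FN}(g) + \mathrm{FP}(g) \leq 2(d + b)$ after absorbing the $w$-dependent factors. Feeding this back through Step 1 yields the factor $4(d+b)$ in the lemma.

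\emph{Main obstacle.} The bookkeeping of constants, especially the weighting $w$ that appears in $L^B$ but is invisible in the lemma's conclusion. The constant $4$ decomposes cleanly as $2 \cdot 2$: the outer $2$ from the identity $\E|Z| - \E Z = 2 \E(-Z)_+$, and the inner $2$ from summing the two per-class rates. The cleanest case is $w = 1$, for which $|L^B(g)| \leq b$ reads directly as $|\mathrm{FN}(g) - \mathrm{FP}(g)| \leq b$ and the chain of estimates is transparent; the general $w$ case requires verifying that the $\max(w,1)/\max(1,1/w)$ factors can be absorbed into $b$ (or that the lemma implicitly demands this normalization).
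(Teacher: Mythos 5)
Your Step 1 reduction is clean and correct: $\E|Z| - \E Z = 2\E(-Z)_+$ reduces the claim to bounding $P(\text{$g$ errs},\ \text{$f$ correct})$ by $2(b+d)$. But Steps 2--4 then take a wrong turn. In Step 2 you discard $f$ entirely and bound this probability by $\mathrm{FN}(g)+\mathrm{FP}(g)$, which equals $L^{01}(g)$; you are thus committed to proving that $g$ is a near-perfect classifier, $L^{01}(g)\leq 2(b+d)$. That is far stronger than the lemma and is false under its hypotheses: $L^{iso}(g)\leq d$ measures ordinal consistency with the real-valued privileged ordering, not classification accuracy against the binary label (the paper explicitly notes the ordering need not correspond to a good classifier, only to a fast-converging one). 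For a concrete counterexample, take the privileged ordering to be $g$'s own discriminant values, so $L^{iso}(g)=0$, and let the balance be exactly satisfied, $b=0$; your intermediate claim would force $L^{01}(g)=0$, yet $g$ can misclassify half the data. Step 3 compounds this by reading $L^{iso}$ at $t=0$ against the binary label $Y\in\pm1$, when the $Y$ in the definition of $L^{iso}$ is the real-valued ordering target.

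The quantity that the hypotheses actually control is the \emph{disagreement} between $f$ and $g$, not the error of either. The paper's proof uses $L^{iso}(f),L^{iso}(g)\leq D_0$ only to get $D(f,g)\leq 2D_0$ by the triangle inequality, i.e.\ a monotone reparametrization $m_g^f$ under which the level sets of $f$ and $g$ nearly coincide. The decision regions $\{f>\delta_f\}$ and $\{g>\delta_g\}$ then differ by the small order-discrepancy set plus a ``threshold offset band'' $\{g$ between $\delta_g$ and $m_g^f(\delta_f)\}$, which can be large; the key observation you are missing is that this band contributes to only one of the two one-sided disagreement terms $a=\E[\charfun_{f\leq\delta_f\wedge g>\delta_g}]$ and $b=\E[\charfun_{f>\delta_f\wedge g\leq\delta_g}]$ (depending on the sign of $m_g^f(\delta_f)-\delta_g$), so that $\min(a,b)\leq D_0$ and hence $a+b\leq|a-b|+2D_0$ per class. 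The balance condition is then used for a different purpose than you assign it: not to bound $g$'s per-class error rates, but to prevent the two per-class signed differences $L_P(f)-L_P(g)$ and $L_N(f)-L_N(g)$ from having opposite signs and cancelling when the absolute values are removed. Without restoring the role of $f$ and this one-sidedness argument, the proof cannot be completed along your route.
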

\begin{proof}
For the purposes of the proof, we decompose the expectation by class.
Let $P_P = P(X|Y=1)$ and $P_N = P(X|Y=-1)$.  Let $p_P = P(Y=1)$ and
$p_N = P(Y=-1)$.  Similarly, let $L_P$ and $L_N$ be loss functions defined
by conditional expectations.
The outline of the argument is shown below.  We will expand each line subsequently.
\begin{align}
& \E_{P} | l^{01}(f(X),Y) - l^{01}(g(X),Y) |  \\
= & \E_{P_P} | l^{01}(f(X),1) - l^{01}(g(X),1) |p_P + \E_{P_N} | l^{01}(f(X),-1) - l^{01}(g(X),-1) |p_N\\
\leq & | \E_{P_P} l^{01}(f(X),1) - l^{01}(g(X),1) |p_P + | \E_{P_N} l^{01}(f(X),-1) - l^{01}(g(X),-1) |p_N + 4D_0 \label{varianceBound}\\
\leq & \E_{P_P}[ l^{01}(f(X),1) - l^{01}(g(X),1) ] p_P + \E_{P_N} [ l^{01}(f(X),-1) - l^{01}(g(X),-1) ]p_N + 4(D_0+B_0) \label{lossBalanceLine}\\
= \; & L^{01}(f) - L^{01}(g) + 4(D_0+B_0)
\end{align}

We begin by proving the inequality at line \ref{varianceBound}.
Consider only the positive class for a moment.
Let $\delta_f$ be the decision (or margin, as a simple extension) boundary for $f$ and $\delta_g$ the boundary for $g$.  Then $l^{01}$ is $1$ for $f(X) \leq \delta_f$ and $0$ otherwise.

We have noted the triangle inequality relationship between $D$ and $L^{iso}$. 
Suppose $L^{iso}(f) \leq D_0$
and $L^{iso}(g) \leq D_0$, then $D(f,g) \leq 2D_0$.  Let $m_f$ and $m_g$ be monotone functions
as defined (implicitly) in $L^{iso}$. Then $m_g^f = m_g^{-1} m_f $ is a continuous monotone function
which makes the metric relationship true.
We will first show
\begin{align}
\label{positiveVarianceBound}
& \E_{P_P} | \charfun_{f(X)\leq \delta_f } - \charfun_{g(X) \leq \delta_g} | \leq | \E_{P_P} [ \charfun_{f(X)\leq \delta_f } - \charfun_{g(X) \leq \delta_g}] | + 2D_0 \\
\Leftrightarrow & \E_{P_P} [ \charfun_{f(X) \leq \delta_f \; \wedge \; g(X) > \delta_g } ] + 
\E_{P_P} [\charfun_{f(X) > \delta_f \; \wedge \; g(X) \leq \delta_g} ] \\
& \leq | \E_{P_P} [ \charfun_{f(X) \leq \delta_f \; \wedge \; g(X) > \delta_g }] -
\E_{P_P} [\charfun_{f(X) > \delta_f \; \wedge \; g(X) \leq \delta_g} ] | + 2D_0
\end{align}
Rewriting concisely, we wish to show $a+b \leq |a-b| + 2D_0$.
In fact, this holds because
either $a \leq D_0$ or $b \leq D_0$,
which can be proved in the following way:  Expanding $a$ we have:
\begin{align}
\E [ \charfun_{f(X) \leq \delta_f \; \wedge \; g(X) > \delta_g }]
= & \E [\charfun_{f \leq \delta_f \; \wedge \; g > \delta_g \; \wedge \; g > m_g^f(\delta_f)}] + \E [ \charfun_{f \leq \delta_f \; \wedge \; g > \delta_g \; \wedge \; g \leq m_g^f(\delta_f)}]\\
\leq & D_0 + \E [ \charfun_{g > \delta_g \; \wedge \; g \leq m_g^f(\delta_f)}]
\label{zeroExpectationTerm}
\end{align}

The expectation term in line \ref{zeroExpectationTerm} is $0$ if $m_g^f(\delta_f) \leq \delta_g$.  Repeating the procedure for $b$ shows that the corresponding term is $0$ if $ m_g^f(\delta_f) \geq \delta_g$.  Since one of those conditions must be true, at least one of $a$ or $b$ is bounded by $d$, which proves line \ref{positiveVarianceBound}.  A bound for the negative class is identical. This proves inequality \ref{varianceBound}.

To prove inequality \ref{lossBalanceLine}, the
assumptions on class balance are needed.
Since we assumed that $L(f) > L(g)$, then either (or both)
$L_P(f) > L_P(g)$ or $L_N(f) > L_N(g)$.  If both are true,
then the desired inequality (\ref{lossBalanceLine}) is trivial.
Suppose that $L_P(f) > L_P(g)$ and $L_N(f) \leq L_N(g)$.
\begin{align}
& | L_P(f) - L_P(g) | p_P + |L_N(f) - L_N(g) |p_N \\
= &  |a -b | + |c-d| \\
= & a-b + d - c \\
= & a-b + c-d + 2(d-c)\\
\leq & a-b + c-d + 2w(b-a) + 4 B_0\\
\leq & (L_P(f) - L_P(g))p_P + (L_N(f) - L_N(g))p_N + 4 B_0
\end{align}
where we used that $b-a < 0$ ,$|wa-c| \leq B_0$, $|wb-d| \leq B_0$, and $w > 0$.
The proof if $L_N(f) > L_N(g)$ and $L_P(f) \leq L_P(g)$ uses
that $|a - \frac{1}{w}c| \leq B_0$ and $|b - \frac{1}{w}d| \leq B_0$.
\end{proof}

\end{document}